\documentclass[runningheads]{llncs}

\usepackage{hyperref}
\usepackage{graphicx}
\usepackage{amsmath,mathtools}
\usepackage{amssymb}
\usepackage{xspace}
\usepackage{stmaryrd}
\usepackage{multirow}
\usepackage{array}
\usepackage{verbatim}

% Environments

% Preliminaries

\newcommand{\Terms}{\ensuremath{\mathbf{T}}\xspace}

\newcommand{\vx}{{\ensuremath{\vec{x}}}\xspace}

\newcommand{\vz}{\ensuremath{\vec{z}}\xspace}

\renewcommand{\rule}{\ensuremath{\rho}\xspace}
\newcommand{\auxRule}{\ensuremath{\upsilon}\xspace}

\newcommand{\body}{\ensuremath{\beta}\xspace}
\newcommand{\head}{\ensuremath{\eta}\xspace}

\newcommand{\Instance}{\ensuremath{\mathcal{I}}\xspace}
\newcommand{\I}{\Instance}

\newcommand{\substitution}{\ensuremath{\sigma}\xspace}

\newcommand{\subs}{\substitution}

% Chase

% DA

%\newcommand{\Terms}{\ensuremath{\textit{Terms}}\xspace}

% Description Logics

\newcommand{\Concepts}{\ensuremath{\mathbf{C}}\xspace}
\newcommand{\Individuals}{\ensuremath{\mathbf{I}}\xspace}
\newcommand{\Roles}{\ensuremath{\mathbf{R}}\xspace}
\newcommand{\Variables}{\ensuremath{\mathbf{V}}\xspace}
\newcommand{\Expressions}{\ensuremath{\mathbf{E}}\xspace}

\newcommand{\Set}{\ensuremath{\mathcal{S}}\xspace}

\newcommand\cupEq{\protect{~\cup{\kern -0.5em}=~}}
\newcommand\ncupEq{\protect{~\#{\kern -0.3em}\cup{\kern -0.5em}=~}}
\newcommand{\ELp}{$\mathcal{E \kern -0.2em L}^+$} 
\newcommand{\ELpp}{$\mathcal{E \kern -0.2em L}^{++}$}

\newcommand{\Protege}{Prot\'{e}g\'{e}\xspace}
\newcommand{\Self}{\textsf{Self}\xspace}

\title{Rule-based OWL Modeling \\ with ROWLTab \Protege Plugin} 
\titlerunning{Rule-based OWL Modeling with ROWLTab \Protege Plugin}

\author{Md. Kamruzzaman Sarker\inst{1} \and Adila Krisnadhi\inst{1,2} \and \\ David Carral\inst{3} \and Pascal Hitzler\inst{1}}
\institute{Data Semantics (DaSe) Laboratory, Wright State University, OH, USA \and Faculty of Computer Science, Universitas Indonesia, Depok, Indonesia \and Center for Advancing Electronics Dresden (cfaed), TU Dresden, Germany}

\authorrunning{Sarker, Krisnadhi, Carral, Hitzler}

\begin{document}

\maketitle

\begin{abstract}
  It has been argued that it is much easier to convey logical statements using rules rather than OWL (or description logic (DL)) axioms.  Based on recent theoretical developments on transformations
  between rules and DLs, we have developed ROWLTab, a \Protege plugin that allows users to enter OWL axioms by way of rules; the plugin then automatically converts these rules into OWL 2
  DL axioms if possible, and prompts the user in case such a conversion is not possible without weakening the semantics of the rule. In this paper, we present ROWLTab, together with a user evaluation
  of its effectiveness compared to entering axioms using the standard \Protege interface. Our evaluation shows that modeling with ROWLTab is much quicker than the standard interface, while at the
  same time, also less prone to errors for hard modeling tasks.
\end{abstract}

%%% Local Variables:
%%% mode: latex
%%% TeX-master: "main"
%%% End:

\section{Introduction}\label{sec:intro}

About a decade ago, not long after description logics \cite{BCMNP07} had been chosen as the basis for the then-forthcoming W3C Recommmendation for the Web Ontology Language OWL \cite{owl2-primer}, a
rather agressively voiced discussion as to whether a rule-based paradigm might have been a better choice emerged % in a number of academic publications
in the Semantic Web community \cite{HPPH05,PaHo06a,2005-w3c-rules-workshop}. On the one hand, this eventually led to a new W3C Recommendation on the Rule Interchange Format RIF \cite{RIF-Overview},
based on the rules paradigm, while an alternative approach which layered rules on top of the existing OWL standard, known as SWRL \cite{SWRL}, remained a mere W3C member submission. However, SWRL has
proven significantly more popular than RIF. To see this, it may suffice to compare the Google Scholar citation numbers for SWRL -- over 2500 since 2004 -- and RIF -- just over 50 since 2009.
%  \url{https://scholar.google.com/citations?view_op=view_citation&hl=en&citation_for_view=0ypdmcYAAAAJ:u-x6o8ySG0sC} -- over 2500 since 2004 -- and RIF
%  \url{https://scholar.google.com/citations?view_op=view_citation&hl=en&citation_for_view=exJPwRQAAAAJ:_FxGoFyzp5QC} -- just over 50 since 2009.}

At the same time, researchers kept investigating more elaborate ways to bridge between the two paradigms \cite{KnorrHM12,Krotzsch10,KPSV:Expr-OWL-PL,DBLP:conf/semweb/KrotzschRH08,rkhsv-rr07}, and in
particular how to convert rules into OWL \cite{DBLP:conf/esws/MartinezH12,KrisnadhiMH11,KrotzschMKH11,KrotzschRH08}. These results regarding conversion now make it possible to express axioms first as rules, and only then
to convert them into OWL. We have consistently used this approach to model complex OWL axioms throughout the last few years, as rules are, arguably, easier to understand and produce than OWL
(or description logic) axioms in whichever syntax.

Consider the sentence: ``If a person has a parent who is female, then this parent is a mother'', which we consider to be of medium difficulty in terms of modeling it in OWL. As a first-order
logic rule, this can be expressed as
$$\text{Person}(x) \wedge \text{hasParent}(x,y) \wedge \text{Female}(y) \to \text{Mother}(y).$$
This can be expressed in OWL % using Manchester Syntax -- Eq.~\eqref{eq:manchester}, or equivalently, 
using description logic syntax as follows: % -- Eq.~\eqref{eq:dlsyntax} as follows:
\begin{gather*}
  % \text{Female and (inverse hasParent some Person) SubClassOf Mother} \label{eq:manchester}\\
\text{Female} \sqcap \exists\text{hasParent}^-.\text{Person} \sqsubseteq \text{Mother} \label{eq:dlsyntax}
\end{gather*}

Based on anecdotal evidence, many people find it easier to come up with the rule than directly with the OWL axiom. Following this lead, we have produced a \Protege \cite{protege2015} plugin, called
ROWLTab, which accepts rules as input, and adds them as OWL axioms to a given ontology, provided the rule is expressible by an equivalent set of such axioms. In case the rule is not readily
convertible, the user is prompted and asked whether the rule shall be saved as SWRL rule.

In order to assess the usefulness of the ROWLTab, we have furthermore conducted a user experiment in which we compare the ROWLTab interface for adding axioms to the standard \Protege interface. Our
hypotheses for the user evaluation were that given complex relationships expressed as natural language sentences as above, users will be quicker to add them to an ontology using the ROWLTab than with
the standard \Protege interface, and that they will also make less mistakes in doing so. The first hypothesis has been fully confirmed by our experiment, the second has been partially confirmed.

The rest of the paper will be structured as follows. 
%Section~\ref{section:preliminaries} introduces the theoretical notions used throughout the paper. 
In Section \ref{sec:transform} we explain in more
detail the rule-to-OWL conversion algorithm used. In Section \ref{sec:system} we present the ROWLTab \Protege plugin. In Section \ref{sec:eval} we present our user evaluation and results, and in
Section \ref{sec:conclusion} we conclude.
More information about the plugin can be found at {\small\url{http://daselab.org/content/modeling-owl-rules}}. A preliminary report on the plugin, without evaluation and with much fewer details, was
presented as a software demonstration at the ISWC2016 conference \cite{ROWL-iswc16}.

%%% Local Variables:
%%% mode: latex
%%% TeX-master: "main"
%%% End:

\section{SWRL Rules to OWL Axioms Transformation}
\label{sec:transform}
%\section{Preliminaries}
\label{section:preliminaries}

In this section we introduce theoretical notions employed across the paper.
Note that, due to space constraints, some of the definitions below are simplified and may not exactly correspond with existing definitions from different sources.

Let \Concepts, \Roles, \Individuals and \Variables be pairwise disjoint, countably infinite sets of \emph{classes}, \emph{properties}, \emph{individuals} and \emph{variables}, respectively, where
$\top, \bot \in \Concepts$, the \emph{universal property} $U \in \Roles$ (i.e., \texttt{owl:topObjectProperty}) and, for every $R \in \Roles$, $R^- \in \Roles$ and $R^{--} = R$.  A \emph{class expression}
is an element of the grammar $\Expressions ::= (\Expressions \sqcap \Expressions) \mid \exists R.\Expressions \mid \exists R.\Self \mid C \mid \{a\}$ where $C \in \Concepts$, $R \in \Roles$ and
$a \in \Individuals$.  Furthermore, let $\Terms = \Individuals \cup \Variables$ be the set of \emph{terms}.  An \emph{atom} is a formula of the form $C(t)$ or $R(t, u)$ where $C \in \Expressions$,
$R \in \Roles$ and $t, u \in \Terms$.
For the remainder of the paper, we identify pairs of atoms of the form $R(t, u)$ and $R^-(u, t)$.  Furthermore, we identify a conjunction of formulas with the set containing all the formulas in the
conjunction and vice-versa.

An \emph{axiom} is a formula of the form $C \sqsubseteq D$ or $R_1 \circ \ldots \circ R_n \sqsubseteq R$ with $C, D \in \Expressions$ and $R_{(i)} \in \Roles$.  A \emph{rule} is a first-order logic
formula of the form $\forall \vx(\body(\vx) \to \head(\vz))$ with \body and \head are conjunctions of atoms, \vx and \vz are non-empty sets of terms where $\vz \subseteq \vx$.  As customary, we
often omit the universal quantifier from rules. 
Axioms and rules are also referred to as \emph{logical formulas}.
Axioms as defined above essentially correspond to OWL 2 EL axioms \cite{OWLtractable} plus inverse property expression, while rules correspond to SWRL rules minus (in)equality and
built-in atoms.

Consider some terms $t$ and $u$ and a conjunction of atoms \body.  We say that \emph{$t$ and $u$ are directly connected in \body} if both terms occur in the same atom in \body.  We say \emph{$t$ and
  $u$ are connected in \body} if there is some sequence of terms $t_1, \ldots, t_n$ with $t_1 = t$, $t_n = u$, and $t_{i-1}$ and $t_i$ are directly connected in \body for every $i = 2, \ldots, n$.

\begin{comment}
\begin{table}[t]
\caption {Semantics of Axioms where $C_{(i)}, D \in \Expressions$, $R_{(i)}, S \in \Roles$, $a \in \Individuals$, $n\geq 1$.}
\label{table:semantics} 
\centering
\setlength{\tabcolsep}{4pt}
\renewcommand{\arraystretch}{1.15}
\begin{tabular}{ l | l | l }
$\textbf{Constructor}$		& $\textbf{Syntax}$ & $\textbf{Semantics}$ \\ \hline
Conjunction				& $C_1 \sqcap \ldots \sqcap C_n$ & $C^\I_1 \cap \ldots \cap C^\I_n$ \\
Existential restriction		& $\exists R.C$ & $\{d \mid \exists e ((d, e) \in R^\I, e \in C^\I)\}$ \\
Self restriction				& $\exists R.\Self$ & $\{d \mid (d, d) \in R^\I\}$ \\
Nominal					& $\{a\}$ & $\{a^\I\}$ \\
Property chain				& $R_1 \circ \ldots \circ R_n$ & $\{(d_0, d_n) \mid (d_{i-1}, d_i) \in R_i^\I \text{ for } i = 1, \ldots, n\}$ \\
% Universal property & $R_{U}$ & $\dom{\I} \times \dom{\I}$ \\
\multicolumn{3}{c}{ }\\
$\textbf{Axiom}$			& $\textbf{Syntax}$ & $\textbf{Semantics}$ \\ \hline
Class inclusion			& $C \sqsubseteq D$ & $C^\I \subseteq D^\I$ \\
Property inclusion				& $R_1 \circ \ldots \circ R_n \sqsubseteq S$ & $(R_1 \circ \ldots \circ R_n)^\I \subseteq S^\I$ 
\end{tabular}
\end {table}
\end{comment}

The notions of \emph{interpretation} and of an interpretation \emph{entailing an axiom} follow the standard definitions for description logics \cite{FOST}. For rules \rule of the form $\body \to \head$ we say that an interpretation \emph{\I entails \rule} if, for every substitution \subs we have that $\I, \subs \models \body$ implies $\I, \subs \models \head$, i.e., the semantics of rules follows the standard semantics of first-order predicate logic.
%
%An \emph{interpretation} \I is a tuple $(\dom{\I}, \fun{\I})$ where \dom{\I} is a non-empty set and \fun{\I} is a function mapping individuals to elements of \dom{\I}, class names to subsets of
%\dom{\I} and property names to binary relations over \dom{\I} such that $\top^\I = \dom{\I}$, $\bot^\I = \emptyset$, $U = \dom{\I} \times \dom{\I}$, and $R^\I$ is the inverse relation of $(R^-)^\I$
%for every property $R \in \Roles$.
%
%Consider an interpretation \I, a substitution \subs mapping terms to elements in \dom{\I}, an axiom \axiom, an conjunction of atoms \body and a rule \rule of the form $\body \to \head$.  We say that
%an interpretation \emph{\I entails an axiom \axiom} if the corresponding condition shown in Table~\ref{table:semantics} holds.  We say that \emph{\I entails \body with respect to \subs} if
%$\subs(a) = a^\I$ for every $a \in \Individuals$ and $(\subs(t_1), \ldots, \subs(t_n)) \in P^\I$ for every atom $P(t_1, \ldots, t_n) \in \body$.  We say that \emph{\I entails \rule} if, for every
%substitution \subs, $\I, \subs \models \body$ implies $\I, \subs \models \head$.
%
We say that two sets of logical formulas $\Set$ and $\Set'$ are \emph{equivalent} if and only if every interpretation \I that entails \Set also entails $\Set'$ and vice-versa.  Furthermore, we say
that $\Set'$ is a \emph{conservative extension} of \Set if and only if (i) every interpretation that entails $\Set'$ also entails $\Set$ and (ii) every interpretation that entails \Set and is only
defined for the symbols in \Set can be extended to an interpretation entailing $\Set'$ by adding suitable interpretations for additional signature symbols.
It is well-known that a set of logical formulas can be replaced by another set without affecting the outcome of reasoning tasks if the latter set is a conservative extension of the former.

%Given a rule, our transformation, which is described later in this section, attempts the computation of a set of axioms which is a conservative extension for this rule.

%\section{SWRL Rules to OWL Axioms Transformation}
%\label{sec:transform}

We now formally discuss the transformation of rules into axioms.  We do not include a comprehensive description of this transformation, which was introduced in \cite{KrotzschRH08}, but only
present a simplified version in an attempt to make this publication more self-contained. Specifically, our presentation makes the following assumptions about rules, all without loss of generality.
\begin{enumerate}
\item Rules do not contain constants.  Note that, an atom of the form $R(a, b)$ (resp. $A(a)$) with $a, b \in \Individuals$ in the body of a rule may be replaced by an equivalent atom
  $\exists U.(\{a\} \sqcap \exists R.\{b\})(x)$ (resp. $\exists U.(\{a\} \sqcap A)(x)$) where $x$ is any arbitrarily chosen variable occurring in the body of the rule.  Furthermore, atoms of the
  form $R(x, a)$ with $x \in \Variables$ and $a \in \Individuals$ occurring in the body may be replaced by $\exists R.\{a\}(x)$.  Similar transformations may be applied to the
  head of a rule in order to remove all occurrences of constants.
\item The head of a rule is of the form $C(x)$ or $S(x, y)$ with $C \in \Expressions$, $S \in \Roles$ and $x, y \in \Variables$.
Note that, a rule of the form $\body \to \head$ is equivalent to a set of rules $\{\body \to \head_1, \ldots, \body \to \head_n\}$ provided that $\head = \head_1 \cup \ldots \cup \head_n$.
%Thus, every rule can be normalized into a set of rules that only contain a single atom in the head.
\item All of the variables in the body of a rule are connected.  If two variables $x$ and $y$ are not connected in the body of a rule, we may simply add the atom $U(x, y)$ to the body of the rule resulting in a semantically equivalent rule.
\end{enumerate}

The preprocessing implied by the aforementioned assumptions has been implemented in the ROWLTab plugin and thus, such constraints need not be considered by the end users.
Moreover, such preprocessing is carefully implemented in an attempt to minimize the number of necessary modifications for a rule to satisfy assumptions (1-3).
We  now proceed with the definition of our translation.
\begin{definition}
\label{definition:transformation1}
Given some rule $\rule = \body \to \head$, let $\delta(\body \to \head)$ be the rule that results from exhaustively applying transformations (1-3) where (1) and (2) should be applied with higher priority than (3).
\begin{enumerate}
\item Replace every atom of the form $R(x, x)$ in \body or in \head with $\exists R.\Self(x)$.
\item Replace every maximal subset of the form $\{C_1(y) \ldots, C_n(y)\} \subseteq \body$ with the atom $C_1 \sqcap \ldots \sqcap C_n(y)$.
\item For every variable $y$ not occurring in \head that occurs in exactly one binary atom in \body of the form $R(z, y)$, do the following:
\begin{itemize}
\item If there is some atom of the form $C(y) \in \body$, then replace the atoms $R(z, y)$ and $C(y)$ in \body with the atom $\exists R.C(z)$.
\item Otherwise, replace the atom $R(z, y)$ in \body with $\exists R.\top(z)$.
\end{itemize}
\end{enumerate}
\end{definition}

\begin{example}
\label{example1}
Consider the rule $\rho = \text{Person}(x)  \wedge \text{hasParent}(x, y) \wedge \text{Female}(y) \to \text{Mother}(y)$.
Then, the transformation presented in the previous definition would sequentially produce the following sequence of rules
\begin{align*}
(\exists \text{hasParent}^-.\text{Person})(y) \wedge \text{Female}(y) &\to \text{Mother}(y) \\
(\exists \text{hasParent}^-.\text{Person} \sqcap \text{Female})(y) &\to \text{Mother}(y)
\end{align*}
\end{example}

Rule $\delta(\rho)$ from the previous example can be directly transformed into an axiom as indicated in the following lemma.

\begin{lemma}
\label{lemma1}
Consider some rule $\rule$.
If $\delta(\rule)$ is of the form $C(x) \to D(x)$, then \rule is equivalent to the axiom $C \sqsubseteq D$.
\end{lemma}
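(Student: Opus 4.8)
The plan is to argue in two stages. First, I would show that the map $\delta$ sends a rule to a logically equivalent rule over the same signature, so that $\rule$ is equivalent to $\delta(\rule)$, which by hypothesis is $C(x) \to D(x)$; second, that this rule is in turn equivalent to the axiom $C \sqsubseteq D$. Since equivalence is transitive, chaining the two stages gives the lemma. For the first stage it suffices to check that a single application of any of transformations (1)--(3) of Definition~\ref{definition:transformation1} yields a rule equivalent to the one it was applied to; exhaustive application then preserves equivalence step by step. Transformations (1) and (2) are immediate: they replace a subformula of the body or head by a logically equivalent one with the same free variables, since $R(x,x)$ and $\exists R.\Self(x)$ are satisfied by exactly the same interpretation--substitution pairs by the semantics of the \Self restriction, and $\{C_1(y),\dots,C_n(y)\}$ and $C_1 \sqcap \dots \sqcap C_n(y)$ agree by the semantics of $\sqcap$; replacing a subformula of a rule by an equivalent one cannot change which interpretations entail the rule.

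Transformation (3) is the case I expect to require the most care, because it eliminates a variable and so cannot be reduced to a local substitution; it has to be argued at the level of the universally quantified rule. Assume $y$ does not occur in the head and occurs in the body in exactly one binary atom, which --- using the convention $R(t,u) = R^-(u,t)$ --- we write as $R(z,y)$ with $z \neq y$; since (2) has priority over (3), $y$ also occurs in at most one unary atom $C(y)$, so the rule has the form $\beta' \wedge R(z,y) \wedge C(y) \to \eta$ with $y$ occurring neither in $\beta'$ nor in \head. I would then verify both directions. If an interpretation $\I$ entails $\beta' \wedge \exists R.C(z) \to \eta$ and a substitution satisfies $\beta' \wedge R(z,y) \wedge C(y)$ in $\I$, then its values on $z$ and $y$ witness that the value of $z$ lies in $(\exists R.C)^\I$, and since $y$ is absent from $\beta'$ the substitution already satisfies $\beta' \wedge \exists R.C(z)$, so $\I$ forces \head; conversely, if $\I$ entails the original rule and a substitution satisfies $\beta' \wedge \exists R.C(z)$ in $\I$, a witness for $\exists R.C(z)$ extends the substitution to one satisfying $R(z,y) \wedge C(y)$ without altering $\beta'$ or \head, so \head is forced again. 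The subcase without an atom $C(y)$ is identical with $\exists R.\top$ in place of $\exists R.C$. The delicate point is that the three side conditions in the definition --- $y$ absent from the head, $y$ in exactly one binary atom, and, via the priority clause, at most one unary atom on $y$ --- are precisely what licenses moving the quantifier on $y$ past the rest of the body and the head; I would make sure that each of them is used.

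For the second stage I would unfold the semantics of $C(x) \to D(x)$: an interpretation $\I$ entails it iff for every substitution $\subs$ with $\I, \subs \models C(x)$ we also have $\I, \subs \models D(x)$; as $\subs$ ranges over all substitutions, the value it assigns to $x$ ranges over the whole domain of $\I$, so this condition says exactly that every element of $C^\I$ belongs to $D^\I$, i.e.\ $C^\I \subseteq D^\I$, which by definition is entailment of $C \sqsubseteq D$. Combining the two stages, $\rule$ and $C \sqsubseteq D$ are entailed by exactly the same interpretations, as claimed.
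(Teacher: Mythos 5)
Your proposal is correct and follows essentially the same route as the paper: establish $\rule \equiv \delta(\rule)$ by showing each application of transformations (1)--(3) preserves equivalence, observe that $C(x) \to D(x)$ and $C \sqsubseteq D$ are entailed by the same interpretations, and conclude by transitivity. The only difference is that you actually carry out the quantifier-manipulation argument for transformation (3) (correctly identifying the role of the side conditions and of the priority of (2) over (3)), which the paper's proof asserts without detail.
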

\begin{proof}
Let \auxRule and $\auxRule'$ be some rules such that $\auxRule'$ results by applying some of the transformations (1-3) introduced in Definition 1 to \auxRule.
Note that, by definition, we can conclude equivalency between \auxRule and $\auxRule'$.
Thus, we can show via induction that $\rule$ is equivalent to $\delta(\rule)$.
Furthermore, if $\delta(\body \to \rule)$ is of the form $C(x) \to D(x)$, then, by the definition of the semantics of rules and axioms, $C \sqsubseteq D$ is equivalent to $\delta(\body \to \rule)$.
Since the equivalence relation is transitive, we can conclude that \rule is equivalent to $C \sqsubseteq D$.
\end{proof}

As indicated by Lemma \ref{lemma1}, rule \rule from Example \ref{example1} is equivalent to the axiom
$\exists \text{hasParent}^-.\text{Person} \sqcap \text{Female} \sqsubseteq \text{Mother}$.

\begin{lemma}
Consider some rule $\rule$.
If the rule $\delta(\rule)$ is of the form $\bigwedge_{i = 2}^n (C_i(x_{i-1})  \wedge R_i(x_{i-1}, x_i)) \wedge C_n(x_n) \to S(x_1, x_n)$, then the set of axioms
$\{C_i \sqsubseteq \exists R_{C_i}.\Self \mid i = 1, \ldots, n\} \cup \{R_{C_1} \circ R_1 \circ \ldots \circ R_{C_{n-1}} \circ R_n \circ R_{C_n} \sqsubseteq S\}$
where all $R_{C_i}$ are fresh properties unique for every class $C_i$ is a conservative extension of the rule \rule.
\end{lemma}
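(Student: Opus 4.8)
The plan is to move from $\rho$ to $\delta(\rho)$ and then verify the two clauses in the definition of a conservative extension directly. As observed in the proof of Lemma~\ref{lemma1}, each transformation (1--3) of Definition~\ref{definition:transformation1} sends a rule to an equivalent one, so $\rho$ is equivalent to $\delta(\rho)$; these transformations moreover introduce no new predicate symbols, so $\rho$ and $\delta(\rho)$ share the same signature. Hence it suffices to show that the displayed axiom set, call it $\mathcal S'$, is a conservative extension of $\delta(\rho)$, and the only symbols of $\mathcal S'$ lying outside the signature of $\delta(\rho)$ are the fresh self-loop properties $R_{C_i}$, so clause (ii) will only ask us to interpret those.

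For clause (i), I would take an arbitrary interpretation $\mathcal I$ with $\mathcal I \models \mathcal S'$ and an arbitrary substitution $\sigma$ with $\mathcal I, \sigma \models \body$, where $\body$ is the body of $\delta(\rho)$, and show $\mathcal I, \sigma \models S(x_1,x_n)$. For each class atom $C_i(x)$ of $\body$ the axiom $C_i \sqsubseteq \exists R_{C_i}.\Self$ yields a loop $(\sigma(x),\sigma(x)) \in R_{C_i}^{\mathcal I}$; splicing these loops, each at its own vertex, with the edges $(\sigma(x_{i-1}),\sigma(x_i)) \in R_i^{\mathcal I}$ contributed by the binary atoms exhibits the pair $(\sigma(x_1),\sigma(x_n))$ as a member of the composition $R_{C_1} \circ R_2 \circ R_{C_2} \circ \cdots \circ R_n \circ R_{C_n}$ (the chain written in the statement, up to the evident re-indexing). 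The property inclusion of $\mathcal S'$ then forces $(\sigma(x_1),\sigma(x_n)) \in S^{\mathcal I}$, which is what we wanted; since $\sigma$ was arbitrary, $\mathcal I \models \delta(\rho)$.

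For clause (ii), I would start from an interpretation $\mathcal I$ that entails $\delta(\rho)$ and is defined only over its signature, and extend it to $\mathcal I'$ by setting $R_{C_i}^{\mathcal I'} := \{(d,d) \mid d \in C_i^{\mathcal I}\}$ for each fresh property and leaving every other symbol untouched. Then $\mathcal I' \models C_i \sqsubseteq \exists R_{C_i}.\Self$ is immediate. For the property inclusion, observe that any step through some $R_{C_i}$ in a walk witnessing the chain composition must stay at a single element of $C_i^{\mathcal I}$; contracting all such steps shows that every pair in $(R_{C_1} \circ R_2 \circ \cdots \circ R_n \circ R_{C_n})^{\mathcal I'}$ has the form $(a_1,a_n)$ for elements $a_1,\dots,a_n$ with $a_i \in C_i^{\mathcal I}$ and $(a_{i-1},a_i) \in R_i^{\mathcal I}$ --- i.e.\ for an assignment $x_i \mapsto a_i$ satisfying $\body$. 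Since $\mathcal I \models \delta(\rho)$ this gives $(a_1,a_n) \in S^{\mathcal I} = S^{\mathcal I'}$, so $\mathcal I' \models \mathcal S'$, and both clauses hold.

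The routine content is light; the care goes into the bookkeeping --- pinning down the exact alternation of the fresh roles $R_{C_i}$ with the path roles inside the chain, handling the boundary atoms (in particular $C_n(x_n)$ and whichever class atom sits on $x_1$) on the same footing as the interior ones, and confirming that the $R_{C_i}$ really occur nowhere else in $\delta(\rho)$, which is exactly what makes the redefinition in clause (ii) harmless. I would also note, though it does not affect the conservative-extension claim as stated, that the generated role-chain axiom need not satisfy the regularity restriction of $\mathcal{SRIQ}$ (equivalently, of OWL~2~DL), so additional work is required if the translation is to stay within the decidable fragment.
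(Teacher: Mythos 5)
Your proposal is correct and follows the same decomposition as the paper: reduce to $\delta(\rule)$ via the equivalence established in Lemma~\ref{lemma1}, then argue that the rolification axioms plus the property chain form a conservative extension of $\delta(\rule)$. The paper's own proof simply asserts that second step as a known fact, whereas you verify both clauses of the conservative-extension definition explicitly (and correctly note the off-by-one indexing in the stated rule form), so your write-up is a strictly more detailed version of the same argument.
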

\begin{proof}
As shown in proof of Lemma \ref{lemma1}, rules \rule and $\delta(\rule)$ are indeed equivalent.
Thus, the lemma follows from the fact that the set of rules presented in the statement of the lemma is a conservative extension of $\delta(\rule)$.
\end{proof}

We finalize the section with some brief comments about the presented transformation.
First, the transformation is sound, but has not been proven to be complete.
That is, there may be some rules which are actually expressible as axioms, but cannot be handled by our translation algorithm.
Moreover, the transformation of a given rule may in some cases produce axioms that violate some of the global syntactic restrictions of OWL 2 DL, such as regularity restriction on property inclusion/hierarchy, when added to an ontology.

\section{Plugin Description and Features}\label{sec:system}

%\textbf{2-3 pages}

% Sarker

% 1 page or a little bit more, ideally including an interesting screenshot

% System description with focus on capabilities. rule transformation explained already in previous section, so focus on workflow for user interaction, how it relates to the SWRL tab, how it interacts with the ontology. Perhaps the narrative could follow an example?

\begin{figure}[tp]
  \centering
  \includegraphics[width=\textwidth]{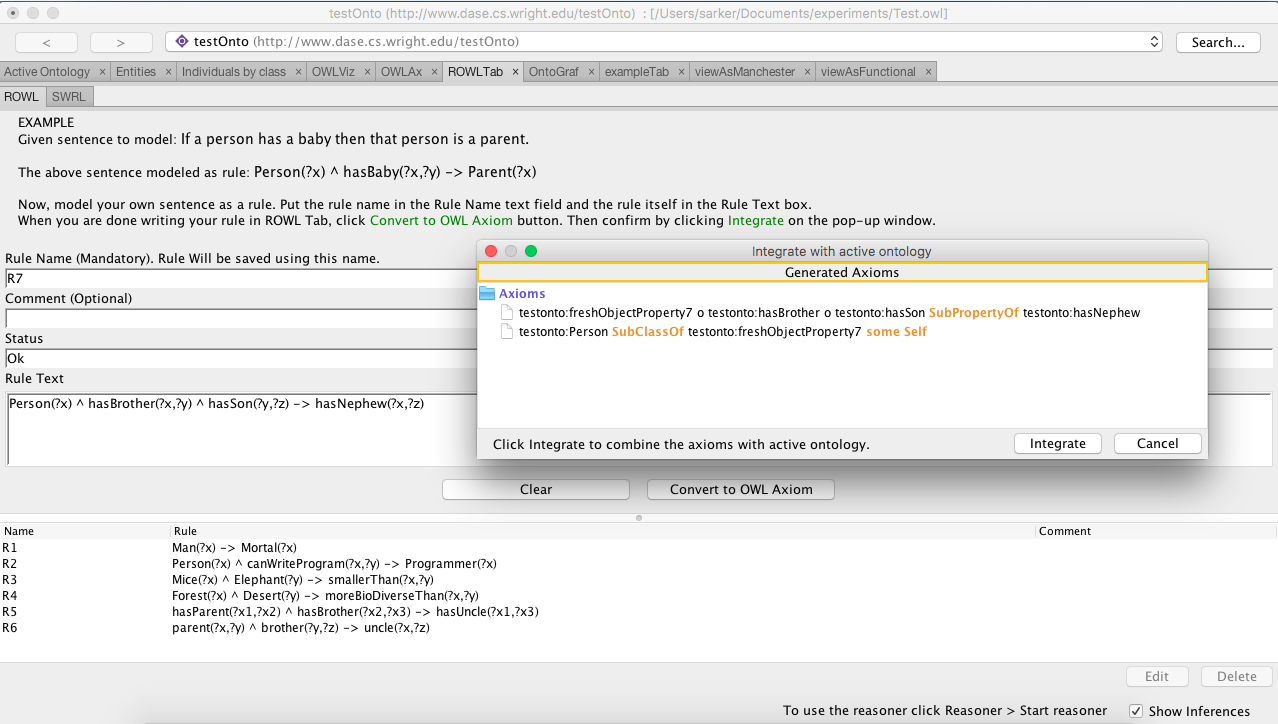}
  \caption{The ROWLTab interface with generated axioms. }
  \label{fig:rowl}
\end{figure}

Figure \ref{fig:rowl}  shows the user interface of the ROWLTab plugin with generated axioms from a rule and also  %Figure \ref{fig:rowlSavedRules} 
shows previously saved rules in the bottom part of the user interface. 
%This plugin is implemented on top of \Protege's SWRLTab 
%\footnote{\url{https://github.com/protegeproject/swrltab-plugin}; is reusable under BSD 2-clause licensex.} 
%plugin implementation. 
As seen in the figure, the plugin consists of two tabs: ROWL and SWRL. The latter is really the SWRLTab input interface, while the former is our implementation of rule-to-OWL conversion
functionality. We have in fact reused the source code of the SWRLTab, kept its functionality intact and added extra functionality by means of the ROWL tab.
%\textbf{Sarker: Can you please explain this in more detail? What exactly did you do, reuse and modify their source code? Please also add a link or citation for the SWRLTab.}. 
The upper part of the interface is for rule insertion and the bottom part is for rule modification.  At the top a modeling example is also shown. 

A user can enter a rule in the ``Rule Text'' input box using the standard SWRL syntax, e.g. 
\begin{center}
\verb|Person(?x) ^ hasChild(?x,?y) ^ Female(?y) -> hasDaughter(?x,?y)|
\end{center}
Every rule needs a distinct name to be eligible to be saved for later modification. A suggested rule name is automatically generated but the user also has the option to change the rule name; this can
be done in the ``Rule Name'' input box. The user can also give annotations to a rule in the ``Comment'' input box. The plugin does syntax checking of the rule, but nothing more sophisticated,
e.g. tautologies can also be entered, and checks for global constraints like RBox regularity, which are required for the ontology to stay within OWL 2 DL, are not performed. Since rule-to-OWL
conversion often results in the use of property chains, extra care is needed by the modeler to ensure compliance, if compliance is desired; this can also be checked by using a reasoner such as HermiT
\cite{msh09hypertableau} from within \Protege. When the user is writing a rule it checks whether a predicate is already declared or not. If not declared it will show that the predicate is invalid and
the user needs to declare it before the rule can be converted to OWL.  Auto completion of predicate names is also supported. The user can use tab-key for auto-completion to existing class, object
properties, data properties and individual names.

Figure \ref{fig:rowl} also shows a button ``Convert to OWL Axiom'' below the ``Rule Text'' input box. This button is initially deactivated and if the inserted rule is syntactically correct then this
button becomes clickable.  When the ``Convert to OWL Axiom'' button is clicked, ROWLTab will attempt to apply the rule-to-OWL transformation described in the previous section to the given rule. If
successful, a pop-up will appear displaying one or more OWL axioms resulting from the transformation, presented in Manchester syntax. These axioms can then be integrated into the active ontology by
clicking the ``Integrate'' button of the pop-up interface.
%\textbf{Sarker: Please also explain the automated checking whether a predicate is already declared.}
If the given rule cannot be transformed into OWL axioms, ROWLTab will prompt the user if (s)he still want to insert the rule into the ontology as a SWRL rule. If the user agrees, ROWLTab will switch
to its SWRL tab and proceed in the same way as adding a rule via the original
SWRLTab. %Note that ROWL is separate from the original SWRLTab, hence any SWRL rule added via ROWL will not affect rules added through the original SWRLTab.

As described in Section \ref{sec:transform}, the translation of a rule into OWL axioms may sometimes require the introduction of fresh object properties, which will be automatically created by the
plugin when necessary.  The namespace for these fresh object properties is taken from the default namespace. To create a unique fresh object property, the plugin counts the number of existing fresh
object properties in the active ontology (including imports) then increments the counter by 1 and creates the new object property with the incremented counter as part of its identifier.

Once the axioms generated from a rule are added to the ontology, the rule will be saved with the ontology for later modification; the rule is in fact added as an annotation to every OWL axiom
generated by the rule. Figure~\ref{fig:rowl} shows saved rules displayed on the bottom left of the ROWLTab plugin. A user can modify or delete rules at any time.  If a rule is modified or deleted, the
axioms generated by that rule will be affected. That means if a rule is deleted then the axioms generated by the rule will also be deleted. To edit a rule which was previously used to generate axioms
the user needs to select the rule first and then click the ``Edit'' button at the bottom right part of the interface. The rule will then appear in the ``Rule Text'' input box for modification. The
user can also double click on the rule to edit that rule. To delete a rule, the user needs to select that rule and then click the ``Delete'' button on the bottom right of the interface.

A feature of ROWLTab not found in the SWRLTab is the possibility to automatically add declarations for classes and properties if the inserted rule contains classes or properties not yet defined in the
ontology.  For example, in the rule above, the original SWRLTab requires that \textsf{hasChild} and \textsf{hasDaughter} be already defined as object properties, and \textsf{Person} and
\textsf{Female} as classes in the ontology.  This means that the user %would add a little bit more freedom for the user to enter any rule (s)he wishes during modeling because (s)he
does not need to first exit the plugin and declare the classes and properties outside the ROWLTab.

Another feature of the ROWLTab plugin is that it actually works as a superset of SWRLTab plugin. So if a user need to work with the SWRLTab plugin, the user does not need to install the SWRLTab plugin
separately, as the ROWLTab contains a full instance of the SWRLTab plugin. If the ROWLTab plugin is installed the user only need to switch the tab from ROWL to SWRL to get the full SWRLTab
functionality. This also creates a limitation that when a new version of SWRLTab is available the developer of ROWLTab has to embed the newer version of SWRLTab explicitly. 

To manage the source code and to be able to modify the source code efficiently in the future we have separated the view module from the control module. The view module consist of the user interface
and the control module implements the rule-to-OWL transformation. Besides those two modules we have a separate listener module, which acts as a bridge between the view and controller module. We have
used Maven as our build system to easily manage the dependency of various APIs. This plugin is open source and the source code is available at the DaseLab website ({\small
  \url{http://dase.cs.wright.edu/content/modeling-owl-rules}}).

%\textbf{Sarker: somewhere, please explain a bit more about the technical details of realizing the plug-in, e.g. that there is a separate module which is called to do the conversion, how you deal with the new %properties suggested by that module (the fix you recently did) etc.}

%%% Local Variables:
%%% mode: latex
%%% TeX-master: "main"
%%% End:

\section{Evaluation}\label{sec:eval}

For the evaluation of the ROWLTab plugin, we conducted a user evaluation to answer the following three questions:
\begin{itemize}
\item Is writing OWL axioms into \Protege via the ROWLTab plugin quicker than writing them directly through the standard \Protege interface?
\item Is writing OWL axioms into \Protege via ROWLTab plugin less error-prone than writing them directly through the standard \Protege interface?
\item Do users view modeling OWL axioms via ROWLTab plugin to be an easier task than directly through the standard \Protege interface?
\end{itemize}

The evaluation was conducted by asking the participants to model a set of natural language sentences as rules using the ROWLTab or as OWL axioms using the standard \Protege interface. We
recorded time and number of keyboard and mouse clicks required for each question (see Section \ref{subsec:time}) and also recorded the responses which we subsequently assessed for correctness (see
Section \ref{subsec:correctness}). Finally, the participants answered a brief questionnaire (see Section \ref{subsec:survey}).

Before describing the experiment in detail, we would like to encourage the readers to do it themselves; it should take less than an hour, the software can be obtained from the ROWLTab website already
indicated.

For the experiment we recruited 12 volunteers from among the graduate students at Wright State University. Our sole selection criterion was that the participants had at least some basic knowledge of
OWL, and had at least minimal exposure to \Protege. All participants were then given a half-hour briefing in which we explained, by means of examples, how to model natural language sentences with and
without the ROWLTab in \Protege.

Each participant was given the same twelve natural language sentences to model. The sentences are listed in Table \ref{tab:evalquest} where group A consists of sentence 1 to 6 and group B consists of
sentence 7 to 12. As indicated in the table, each group contains two easy, two medium, and two hard sentences to model. Each participant modeled one of the sets of sentences using the ROWLTab and the
other group without using the ROWLTab, and we randomly assigned whether the participant will model Group A using the ROWLTab or Group B using the ROWLTab. In order to minimize learning effects which
may come from different sentences, we made sure that for each sentence in Group A there is a very similar sentence in Group B, and vice-versa: Each sentence number $n$ in Group A corresponds to
sentence number $n+6$ in Group B. We furthermore randomized whether the participant will first model using ROWLTab, and then without the ROWLTab, or vice-versa, also to control for a possible
learning effect during the course of the experiment. There was no time limit for the modeling; participants were informed that it should usually take no longer than an hour to model all twelve
sentences. Participants were also informed that they cannot go back to earlier sentences during the course of the experiment.

%\textbf{Note: should we also state that the participants were not allowed to revisit a sentence after they move on to the subsequent ones?}

\begin{table}[t]
\caption{Evaluation Questions}\label{tab:evalquest}
\begin{tabular}{lp{.405\textwidth}|lp{.395\textwidth}|c}
  \multicolumn{2}{c|}{Group A} & \multicolumn{2}{c|}{Group B} & Difficulty\\
  \hline
  1. & Every father is a parent. & 7. & Every parent is a human. & \multirow{3}{*}{easy}\\
  2. & Every university is an educational institution. & 8. & Every educational institution is an organization. & \\
  \hline
  3. & If a person has a mother then that mother is a parent. & 9. & If a person has a parent who is female, then this parent is a mother. & \multirow{4}{*}{medium}\\
  4. & Any educational institution that awards a medical degree is a medical school. & 10. & Any university that is funded by a state government is a public university. & \\
  \hline
  5. & If a person's brother has a son, then that son is the first person's nephew. & 11. & If a person has a female child, then that person would have that female child as her daughter. 
                                 & \multirow{5}{*}{hard}\\
  6. & All forests are more biodiverse than any desert. & 12. & All teenagers are younger than all twens. &
\end{tabular}
\end{table}

Our categorization into easy, medium, and hard sentences was done as follows: Easy sentences expressed simple subclass relationships. Medium sentences required the use of property restrictions to
model them in OWL; the medium sentence 9 was discussed in Section \ref{sec:intro}. Hard sentences could only be expressed using two or three OWL axioms, together with a technique called
\emph{rolification}~\cite{KrisnadhiMH11,KrotzschMKH11}. For example, sentence 5 when expressed as a rule becomes
$$\text{Person}(x) \wedge \text{hasBrother}(x,y) \wedge \text{hasSon}(y,z) \to \text{hasNephew}(x,z).$$
In order to express this sentence as OWL axioms, one first has to \emph{rolify} the class Person by adding the axiom $\text{Person} \sqsubseteq \exists R_\text{Person}.\text{Self}$,
where $R_\text{Person}$ is a fresh property name, and to then add the property chain axiom
$$
R_\text{Person} \circ \text{hasBrother} \circ \text{hasSon} \sqsubseteq \text{hasNephew}.
$$

%\textbf{Note: sentence 5,6,11,12 required the participants who were asked to model them in \Protege to declare fresh object properties. I am adding the explanation below. We could remove it, though I prefer to disclose it.} 

We informed all the participants regarding the total number of easy, medium, and hard sentences the participants would face.  
With each sentence, we also displayed the suitable class and property-names which had been pre-defined by us, i.e. the participants did not have to declare them in \Protege, and directed participants
to use the displayed class and property names to the maximum extent possible. For example, the pre-defined classes and properties for sentence 9 were Person, hasParent, Female, Mother, while for
sentence 5 they were Person, hasBrother, hasSon, hasNephew. An exception to this is when modeling the hard sentences (5, 6, 11, and 12) via standard \Protege. Those sentences contain class names that
need to be rolified, which necessitates one to declare one or more fresh object properties. In this case, we informed the participants that the hard sentences may require them to declare additional
object properties without disclosing that this is due to rolification.

\subsection{Time Used For Modeling}\label{subsec:time}

Our hypothesis was that, on medium and hard sentences, participants would be able to model quicker with the ROWLTab than without it. Cumulated data is given in Table \ref{tab:time}. 

\begin{table}[t]
\centering
\caption{Average and standard deviation of time (in seconds), number of clicks (keyboard and mouse), and correctness score per difficulty category of sentences.}\label{tab:time}
\setlength{\tabcolsep}{2pt}
\setlength{\extrarowheight}{1pt}
% \begin{tabular}{cc|ccccccc|ccccccc}
%   \multicolumn{2}{c|}{ } && \multicolumn{5}{c}{Time} &&& \multicolumn{5}{c}{\# clicks} \\ 
%   \multicolumn{2}{c|}{ } && \multicolumn{2}{c}{\Protege} && \multicolumn{2}{c}{ROWL} &&& \multicolumn{2}{c}{\Protege} && \multicolumn{2}{c}{ROWL} \\ \hline
%   Category               &&& mean  & stdev               && mean  & stdev            &&& mean  & stdev                && mean  & stdev  \\
%   \hline
%   easy &&&  \phantom{0}79 & \phantom{0}41 && \phantom{0}47 & \phantom{0}9 &&& \phantom{0}44 & \phantom{0}38 && \phantom{0}59 & \phantom{0}19\\
%   medium &&& 312 & 181 && 116 & 61 &&& 216 & 131 && 141 & \phantom{0}91 \\
%   difficult &&& 346 & 218 && 160 & 66 &&& 351 & 318 && 228 & 168\\ 
% %   \cline{1-5}
% %   \multicolumn{1}{c|}{ } & \multicolumn{2}{c|}{ } & \multicolumn{2}{c|}{ } \\ \hline
% % Category & mean   & std  & mean  & std \\
% % \hline
% % easy &  \phantom{0}44 & \phantom{0}38 & \phantom{0}59 & \phantom{0}19 \\
% % medium & 216 & 131 & 141 & \phantom{0}91 \\
% % difficult & 351 & 318 & 228 & 168 \\ \hline
% \end{tabular}

\setlength{\tabcolsep}{7pt}
\begin{tabular}{c|cccccc}
  Sentence  & \multicolumn{2}{c}{Time (in secs)} & \multicolumn{2}{c}{\# clicks} & \multicolumn{2}{c}{Correctness} \\ \cline{2-7}
  Category & \Protege & ROWL & \Protege & ROWL & \Protege & ROWL \\ 
  & avg/std & avg/std & avg/std & avg/std & avg/std & avg/std \\ \hline
  easy     &  \phantom{0}79/\phantom{0}41 & \phantom{0}47/\phantom{0}9 & \phantom{0}44/\phantom{0}38 & \phantom{0}59/\phantom{0}19 & 2.9/0.3 & 2.9/0.3\\
  medium   & 312/181 & 116/61 & 216/131 & 141/\phantom{0}91 & 2.2/0.5 & 2.5/0.8 \\
  hard     & 346/218 & 160/66 & 351/318 & 228/168 & 0.9/0.7 & 2.5/0.7  \\
\end{tabular}
\end{table}

For the statistical analysis, our null hypothesis was that there is no difference between the time taken with ROWLTab versus \Protege. Since each participant had modeled sentences from each difficulty
class, we could perform a paired (two-tailed) t-test -- note that assuming normal distributions appears to be perfectly reasonable for this data. For the medium sentences the null hypothesis was
rejected with $p \approx 0.002<0.01$. For the hard questions the null hypothesis was rejected with $p \approx 0.020<0.05$. Both results are statistically significant with $p<0.05$, thus confirming our
hypotheses.

Interestingly, if we run the same t-test also on the easy sentences, the same null hypothesis is also rejected with $p \approx 0.019 <0.05$. We will reflect on this further below. 

In order to aid us in interpreting the results, we also recorded the number of clicks (keyboard plus mouse) required for modeling each sentence; cumulative data is provided also in Table
\ref{tab:time}. If we run the number of clicks through the same t-test as before, the null hypothesis being that there is no difference between the two interfaces. The corresponding $p$-values are
0.092 (easy), 0.030 (medium) and 0.173 (hard), i.e. we have $p<0.05$ only for the medium sentences.  The click analysis may provide us with a partial answer to the better performance of the ROWLTab
regarding time used: Fewer clicks may in this case simply translate into less time required. However, this observation does not explain the data for easy and hard questions. We will return to this
discussion at the end of Section \ref{subsec:correctness}, after we have looked at answer correctness.

% \begin{table}[t]
% \begin{center}
% \caption{Average number of clicks (keyboard and mouse) for a sentence by difficulty category.}\label{tab:clicks}
% \begin{tabular}{c|c|c|c|c}
% Category & mean \Protege & std \Protege & mean ROWL & std ROWL\\
% \hline
% easy &  \phantom{0}44 & \phantom{0}38 & \phantom{0}59 & \phantom{0}19 \\
% medium & 216 & 131 & 141 & \phantom{0}91 \\
% difficult & 351 & 318 & 228 & 168 
% \end{tabular}
% \end{center}
% \end{table}

%%% Local Variables:
%%% mode: latex
%%% TeX-master: "main"
%%% End:

\subsection{Correctness of Modeled Axioms}\label{subsec:correctness}

Our hypothesis was that for medium and hard questions, participants would provide more correct answers with the ROWLTab than without it. To see if we can confirm this hypothesis, we verify the
correctness of the axioms in the OWL files obtained from the participants. The correct set of axioms for each modeling question is given in Table~\ref{tab:answerkeys}. Since the sentences are short
and the resulting OWL axioms are relatively simple, the verification was done manually. Also, for modeling tasks where the participants were asked to model the sentence via ROWLTab, we check the
correctness of the rules by examining the OWL axioms obtained after translation, which are annotated with information regarding the actual rule input given to ROWLTab. We then assign a score of 0, 1,
2, or 3 to each answer from the participants as follows:
\begin{itemize}
\item Modeling a sentence via ROWLTab, the score is:
  \begin{itemize}
  \item 3 if the participant's rule is fully correct (equivalent to the answer key),
  \item 2 if the participant's mistakes are only in the incorrect use of variables (wrong placement, missing/spurious variables), i.e., the rule still employs the correct predicates in the rule body
    and head and no spurious predicates are used,
  \item 1 if there's a missing predicate in the participant's rule or spurious predicates are used that makes the rule not equivalent to the correct answer,
  \item 0 if the participant provides no answer.
  \end{itemize}
\item Modeling a sentence via the standard \Protege interface, the score is:
  \begin{itemize}
  \item 3 if the participant's OWL axioms are fully correct,
  \item 2 if the participant's OWL axioms employ the correct set of class and property names, but there is a mistake in the use of logical constructs,
  \item 1 if there is a missing or spurious class names or property names, or even missing some necessary OWL axioms,
  \item 0 if the participant provides no answer.
  \end{itemize}
\end{itemize}

\begin{table}[t]
  \caption{Answers to Evaluation Questions -- Ru\emph{i} and Ax\emph{i} are answers for question \emph{i} in the form of rule and OWL axioms, resp. where $R_{1},\dots,R_{7}$ are fresh (object)
    properties generated due to rolification}\label{tab:answerkeys}
\setlength{\extrarowheight}{3pt}
\setlength{\tabcolsep}{4pt}
\footnotesize\centering
\begin{tabular}{l}
  \hline
  Ru1: $\text{Father}(x) \rightarrow \text{Parent}(x)$ \quad Ax1:
  $\text{Father} \sqsubseteq \text{Parent}$\\ \hline

  Ru2: $\text{University}(x) \rightarrow \text{EducationalInstitution}(x)$ \\
  Ax2: $\text{University} \sqsubseteq \text{EducationalInstitution}$ \\ \hline

  Ru3: $\text{Person}(x) \wedge \text{hasMother}(x,y) \rightarrow \text{Parent}(y)$ \\
  Ax3: $\exists\text{hasMother}^{-}.\text{Person} \sqsubseteq \text{Parent}$ \\ \hline

  Ru4: $
  \begin{aligned}[t]
    \text{EducationalInstitution}(x) \wedge \text{awards}(x,y) &\wedge \text{MedicalDegree}(y) \\ &\rightarrow \text{MedicalSchool}(x)
  \end{aligned}
$ \\
  Ax4: $\text{EducationalInstitution} \sqcap \exists \text{awards}.\text{MedicalDegree}  \sqsubseteq \text{MedicalSchool}$ \\ \hline

  Ru5: $\text{Person}(x) \wedge \text{hasBrother}(x,y) \wedge \text{hasSon}(y,z) \rightarrow \text{hasNephew}(x,z)$ \\
  Ax5: $\text{Person} \sqsubseteq \exists R_{1}.\Self, \quad \text{R}_{1} \circ \text{hasBrother} \circ \text{hasSon} \sqsubseteq \text{hasNephew}$ \\ \hline

  Ru6: $\text{Forest}(x) \wedge \text{Desert}(y) \rightarrow \text{moreBiodiverseThan}(x,y)$ \\ 
  Ax6: $\text{Forest} \sqsubseteq \exists R_{2}.\Self, \quad \text{Desert} \sqsubseteq \exists R_{3}.\Self, \quad R_{2} \circ U \circ R_{3} \sqsubseteq \text{moreBiodiverseThan}$\\ \hline

  Ru7: $\text{Parent}(x) \rightarrow \text{Human}(x)$ \quad
  Ax7: $\text{Parent} \sqsubseteq \text{Human}$ \\ \hline

  Ru8: $\text{EducationalInstitution}(x) \rightarrow \text{Organization}(x)$ \\
  Ax8: $\text{EducationalInstitution} \sqsubseteq \text{Organization}$ \\ \hline

  Ru9: $\text{Person}(x) \wedge \text{hasParent}(x,y) \wedge \text{Female}(y) \rightarrow \text{Mother}(x)$ \\
  Ax9: $\text{Person} \sqcap \exists \text{hasParent}.\text{Female} \sqsubseteq \text{Mother}$ \\ \hline

  Ru10: $\text{University}(x) \wedge \text{fundedBy}(x,y) \wedge \text{StateGovernment}(y) \rightarrow \text{PublicUniversity}(x)$ \\
  Ax10: $\text{University} \sqcap \exists \text{fundedBy}.\text{StateGovernment} \sqsubseteq \text{PublicUniversity}$ \\ \hline

  Ru11: $\text{Person}(x) \wedge \text{hasChild}(x,y) \wedge \text{Female}(y) \rightarrow \text{hasDaughter}(x,y)$ \\
  Ax11: $ \text{Person} \sqsubseteq \exists R_{4}.\Self, \quad \text{Female} \sqsubseteq \exists R_{5}.\Self, \quad R_{4} \circ \text{hasChild} \circ R_{5} \sqsubseteq \text{hasDaughter}$ \\ \hline

  Ru12: $\text{Teenager}(x) \wedge \text{Twen}(y) \rightarrow \text{youngerThan}(x,y)$ \\
  Ax12: $\text{Teenager} \sqsubseteq \exists R_{6}.\Self, \quad \text{Twen} \sqsubseteq \exists R_{7}.\Self, \quad R_{6} \circ U \circ R_{7} \sqsubseteq \text{youngerThan}$ \\ \hline
\end{tabular}

\end{table}

% \begin{table}[t]
% \begin{center}
% \caption{Evaluation of the correctness of answers.}\label{tab:correct}
% \begin{tabular}{c|c|c|c|c}
% Category & mean \Protege & std \Protege & mean ROWL & std ROWL\\
% \hline
% easy &  2.9 & 0.3 & 2.9 & 0.3 \\
% medium & 2.2 & 0.5 & 2.5 & 0.8 \\
% difficult & 0.9 & 0.7 & 2.5 & 0.7 
% \end{tabular}
% \end{center}
% \end{table}

The average and standard deviation of the correctness score for easy, medium, and hard questions can be found in Table~\ref{tab:time}.  Here, our null hypothesis was that there is no difference in
correctness of the answers given with ROWLTab versus \Protege. For the same reasons as before, we thus performed a paired (two-tailed) t-test, the null hypothesis being that there be no difference
whether using ROWLTab or not. For the medium sentences we obtained $p \approx 0.18>0.05$, so the null hypothesis could not be rejected. But for the hard questions the null hypothesis was rejected
with $p \approx 0.0001<0.01$. The latter result is statistically significant with $p<0.01$. If we run the same t-test also on the easy sentences, the same null hypothesis cannot be rejected; we in
fact obtain $p \approx 1.0000$.

We thus confirm our hypothesis that ROWLTab helps users in modeling hard sentences correctly; however we could not confirm this for medium sentences on this population of participants. It could be
hypothesized that the participants were sufficiently familiar with \Protege to perform well on medium difficulty, thus use of the ROWLTab only had an effect on time used, as shown in Section
\ref{subsec:time}.

At the same time, the correctness analysis also sheds further light on the hard questions: While participants used less time for these on the ROWLTab, they did not use significantly fewer clicks;
however answer correctness was much higher on the ROWLTab. This seems to indicate that the additional time using \Protege was spent thinking (and indeed, rather unsuccessfully) about the problem,
while this additional thinking was not required when using the ROWLTab.

%%% Local Variables:
%%% mode: latex
%%% TeX-master: "main"
%%% End:

\subsection{Participant Survey}\label{subsec:survey}

We finally used a questionnaire with four questions to assess the subjective value which the use of the ROWLTab had to the participants. For this, we  asked all participants to indicate to what extent they agree with each of the following statements.
\begin{enumerate}
\item ROWLTab is a useful tool to help with ontology modeling.
\item Modeling rules with ROWLTab was easier for me than modeling without it.
\item Given some practice, I think I will find modeling rules with the ROWLTab easier than modeling without it.
\item The ROWLTab is better for ontology modeling than the SWRLTab.
\end{enumerate}

Participants were asked to click, on screen whether they agree with each statement, on a scale from -3 (strongly disagree) to +3 (strongly agree). It turns out that participants agreed highly with all three statements:

\medskip

\centerline{\begin{tabular}{l|c|c}
Question Number & mean & standard deviation\\
\hline
1 (ROWL is a useful tool.) & 2.83 & 0.39\\
2 (ROWL makes modeling easier.) & 3.00 & 0.00\\
3 (Modeling with ROWL easier with some practice.) & 2.75 & 0.45\\
4 (ROWLTab better than SWRLTab) & 1.75 & 1.22
\end{tabular}}

\medskip

In assessing these responses, we need to be aware that the pool of participants came from the investigators' institution, and many of them were either associated with the investigators' lab or had attended classes by one of the investigators. Hence the scores should be interpreted with caution. Nevertheless, the scores for the first three questions indicate strong agreement with the usefulness of the ROWLTab. 

Regarding the fourth question, it should be noted that our briefing did not include a briefing on the SWRLTab. As discussed in Section \ref{sec:system}, the user interaction of the ROWLTab is very similar to that of the SWRLTab, so the only substantial difference would be in the fact that the ROWLTab produces OWL axioms, while the SWRLTab produces SWRL axioms with a different semantics. We do not know to what extent the participants were aware of this difference. A quarter of the participants answered this question with ``0'' (neutral). Results of the experiment can be found at  {\small \url{http://dase.cs.wright.edu/content/rowl} and raw result can be found at {\small
  \url{https://github.com/md-k-sarker/ROWLPluginEvaluation/tree/master/results}} .

%%% Local Variables:
%%% mode: latex
%%% TeX-master: "main"
%%% End:

%%% Local Variables:
%%% mode: latex
%%% TeX-master: "main"
%%% End:

\section{Conclusions and Further Work}\label{sec:conclusion}

We have presented the \Protege ROWLTab plugin for rule-based OWL modeling in \Protege, and its underlying algorithms. We have furthermore reported on a user evaluation for assessing the improvements
arising from the use of ROWLTab.

\begin{table}[t]
\begin{center}
\caption{Summary of evaluation results. Entries indicate whether the difference between using the ROWLTab and not using it were statistically significant.}\label{tab:evalsum}
\begin{tabular}{c|c|c|c|}
category & time & clicks & correctness\\
\hline
easy & significant ($p<0.05$) & not significant &  not significant\\
medium & significant ($p<0.01$) & significant ($p<0.05$) & not significant\\
hard & significant ($p<0.05$) & not significant & significant ($p<0.01$)
\end{tabular}
\end{center}
\end{table}

The evaluation results are summarized in Table \ref{tab:evalsum}: We have a significant time improvement in all three categories (it was hypothesized by us only for medium and hard sentences). In the
medium category, where answer correctness was not significantly different, the ROWLTab required significantly less clicks. In the hard category, the difference in answer correctness was also
significant.

The evaluation results are rather encouraging, and we also already received direct feedback from users that the ROWLTab is considered very useful. But while basic functionality is already in place, we
already see further improvements that can be made to the plugin:
\begin{itemize}
\item When rolification is used for the transformation of a rule to OWL, the ROWLTab currently invents an artificial property name for the fresh object property. It may be helpful to more directly
  support a renaming of these properties, or to come up with a standard naming scheme for properties arising out of rolification. Note, however, that it is not sufficient to have one fresh property
  for each defined atomic class, as in some cases complex classes need to be rolified \cite{KrisnadhiMH11}.
\item The translation of rules into OWL often leads to the use of property chains, which may result in a non-regular property hierarchy, thus violating a global syntactic restriction of OWL 2
  DL. While standard tools such as reasoners, which can be called from within \Protege, can detect this issue, it may be helpful to catch this earlier, e.g. directly at the time when a rule is
  translated.
\item Currently, if a rule is input which cannot be translated to OWL, it is simply saved as a SWRL rule, i.e., with a significantly modified (and, in a sense, restricted) semantics. However, through
  the use of so-called \emph{nominal schemas} \cite{KrotzschMKH11} it is possible to recover more of the first-order semantics of the input rules, and it has even been shown that the use of such
  nominal schemas can lead to performance improvements of reasoners compared to SWRL \cite{SteigmillerGL14}.
\end{itemize}

More substantial possible future work would carry the ROWLTab theme beyond the basic rule paradigm currently supported:
\begin{itemize}
\item The rule syntax could be extended to allow for capturing OWL features which cannot be expressed by means of the basic rules currently supported. In particular, these would be right-hand side (head) disjunctions and existentials as well as cardinality restrictions, as well as left-hand side universal quantifiers. It would even be conceivable to add additonal shortcut notation, e.g. for witnessed universals \cite{CarralKRH14}, or for nominal schemas \cite{KrotzschMKH11}.
\item The development of a full-blown rule syntax for all of OWL 2 DL would then also make it possible to perform all ontology modeling using rules, i.e., to establish an interface where the user would get a pure rules view on the ontology, if desired.
\end{itemize}

We are looking forward to feedback by ontology modelers on the route which we should take with the plugin in the future.

%%% Local Variables:
%%% mode: latex
%%% TeX-master: "main"
%%% End:

\bigskip

\noindent\emph{Acknowledgements.} This work was supported by the National Science Foundation under award 1017225 \emph{III: Small: TROn -- Tractable Reasoning with Ontologies} and the German Research Foundation (DFG) within the Cluster of Excellence ``Center for Advancing Electronics Dresden'' (cfaed).
We would also like to thank Tanvi Banerjee and Derek Doran for some advise on statistics.

\urlstyle{same}
\bibliographystyle{splncs03}
\bibliography{short}

%\appendix

%\input{appendix}

\end{document}